\title{A Diagrammatic Calculus for a Functional Model of Natural Language Semantics}
\titlerunning{String Diagrams of Semantics} 
\author{Matthieu Pierre Boyer}{DI ENS, Paris, France \and Department of Linguistics, Yale University, USA}{matthieu.boyer at ens.fr}{https://orcid.org/0000-0002-1825-0097}{}
\authorrunning{M. P. Boyer}
\keywords{Natural Language Semantics,Parsing,Side Effects,String Diagrams,Type System,Functional Programming}
\tikzset{>=stealth}
\newcommand{\catstyle}[2]{
	\tikzset{#1/.style={color=#2}}
}
\tikzset{dot/.style={circle,draw=black,fill=black,minimum size=1mm,inner sep=0mm}}
\def\cont{\Gamma\vdash}
\def\poulpe{\qquad}
\DeclareMathOperator{\Var}{Var}
\def\ppl{\mathbin{+\mkern-12mu+}}
\definecolor{vulm}{HTML}{7d1dd3}
\definecolor{yulm}{HTML}{ffe500}
\def\ty#1{\ensuremath{\texttt{\color{yulm!70!black}#1}}}
\def\f#1{\ensuremath{\texttt{\color{vulm}#1}}}
\def\w#1{\ensuremath{\mathbf{#1}}\,}
\def\e{\ty{e}}
\def\t{\ty{t}}
\def\r{\ty{r}}
\renewcommand{\O}{\mathcal{O}}
\newcommand{\mL}{\mathcal{L}}
\newcommand{\mC}{\mathcal{C}}
\newcommand{\mF}{\mathcal{F}}
\renewcommand{\phi}{\varphi}
\renewcommand{\epsilon}{\varepsilon}
\newcommand{\id}{\mathrm{id}}
\newcommand{\suchthat}{\,\middle|\,}
\DeclareMathOperator{\Obj}{Obj}
\newcommand{\abs}[1]{\left|#1\right|}
\newcommand{\scalar}[1]{\left\langle #1 \right\rangle}
\newenvironment{mgrammar}%
{
	\setlength\tabcolsep{4pt}
	\begin{tabular}{>{$}l<{$}>{$}r<{$}>{$}l<{$} r}
		}
		{
	\end{tabular}
}
\newcommand*{\firstrule}[3]{#1 &::= & #2& \quad \ifstrempty{#3}{}{\textit{(#3)}}\\}
\newcommand*{\lfrule}[3]{#1 &::= & #2& \quad \ifstrempty{#3}{}{\textit{(#3)}}}
\newcommand*{\gskip}{&&\\}
\newcommand\fracpush{\hfill\mbox{}}
\newcommand\fracnotate[1]{\fracpush\rlap{#1}}
\newcolumntype{C}{>{$}c<{$}}
	\newcolumntype{L}{>{$}l<{$}}
\newcolumntype{R}{>{$}r<{$}}
\def\fmap{\texttt{fmap}}
\newcommand{\suppfrac}[2]{%
	\sbox0{$\genfrac{}{}{0pt}{0}{\ensuremath{#1}}{\ensuremath{#2}}$}%
	\ooalign{%
		\hidewidth
		$\vcenter{\moveright\nulldelimiterspace
				\hbox to\wd0{%
					\xleaders\hbox{\kern.5pt\vrule height 0.4pt width 1.5pt\kern.5pt}\hfill
					\kern-1.5pt
				}%
			}$
		\hidewidth\cr
		$\genfrac{}{}{0pt}{0}{\raisebox{5pt}{\ensuremath{#1}}}{\ensuremath{#2}}$\cr}%
}
\def\combMR{\text{MR}}
\def\combML{\text{ML}}
\def\combUR{\text{UR}}
\def\combUL{\text{UL}}
\def\combC{\text{C}}
\def\combJ{\text{J}}
\def\combA{\text{A}}
\def\combER{\text{ER}}
\def\combEL{\text{EL}}
\def\combDN{\text{DN}}
\newcommand{\@word}[4][]{%
	#2 & #3 & #4\\
	\ifx&#1&%
	\else
	&\multicolumn{2}{l}{Generalizes to \textbf{#1}}\\%
	\fi%
}
\def\word#1#2#3#4{\@word[#4]{#1}{#2}{#3}}
\def\textSq#1{%
	\begingroup
	\setlength{\fboxsep}{0.4ex}
	\setbox1=\hbox{#1}
	\setlength{\@tempdima}{\maxof{\wd1}{\ht1+\dp1}}
	\setlength{\@tempdimb}{(\@tempdima-\ht1+\dp1)/2}
	\raise-\@tempdimb\hbox{\fbox{\vbox to \@tempdima{%
				\vfil\hbox to \@tempdima{\hfil\copy1\hfil}\vfil}}}%
	\endgroup%
}
\def\c@lsep{2.3}
\def\r@wsep{.5}
\def\wordsep{.2}
\tikzset{
	uptree/.style={
			draw=vulm!80!black,
			thick,
		},
	typenode/.style={
			align=center,
			text width=24mm,
		},
	treenode/.style={
			align=center,
			text width=24mm,
		},
	wordnode/.style={
			inner sep=0pt,
			align=center,
		},
	downtree/.style={
			draw=yulm!80!black,
			thick,
		},
}
\newcommand{\wnode}[3]{%
	\node (#2) at (#1*\c@lsep, 0) [wordnode] {#2};
	\node[anchor=north] (#2-) at ($(#1*\c@lsep, 0) + (0, -.142)$) [typenode] {\ensuremath{#3}};
}
\newcommand{\utnode}[3]{%
	\path let \p1 = (#2.north), \p2 = (#3.north) in coordinate (Q1) at (\x1, {max(\y1, \y2)});
	\path let \p1 = (#2.north), \p2 = (#3.north) in coordinate (Q2) at (\x2, {max(\y1, \y2)});
	\node (#2#3) at ($($(Q1)!0.5!(Q2)$) + (0, \r@wsep)$) [treenode] {\ensuremath{#1}};
	\draw[uptree] ($(#2) + (0, \wordsep)$) -- ($(#2#3) + (0, -\wordsep)$);
	\draw[uptree] ($(#3) + (0, \wordsep)$) -- ($(#2#3) + (0, -\wordsep)$);
}
\newcommand{\dtnode}[4][0.5]{%
	\path let \p1 = (#3.south), \p2 = (#4.south) in coordinate (Q1) at (\x1, {min(\y1, \y2)});
	\path let \p1 = (#3.south), \p2 = (#4.south) in coordinate (Q2) at (\x2, {min(\y1, \y2)});
	\node (#3#4) at ($($(Q1)!#1!(Q2)$) + (0, -\r@wsep)$) [treenode] {\ensuremath{#2}};
	\draw[downtree] ($(#3) + (0, -\wordsep)$) -- ($(#3#4) + (0, +\wordsep)$);
	\draw[downtree] ($(#4) + (0, -\wordsep)$) -- ($(#3#4) + (0, +\wordsep)$);
}
\def\dlb#1#2{#1\mathrm{.L}\left(#2\right)}
\def\dnlg#1{#1\mathrm{.h}}
\def\dnin#1{#1\mathrm{.in}}
\def\dnout#1{#1\mathrm{.out}}
\tikzset{
	on each segment/.style={
			decorate,
			decoration={
					show path construction,
					moveto code={},
					lineto code={
							\path [#1]
							(\tikzinputsegmentfirst) -- (\tikzinputsegmentlast);
						},
					curveto code={
							\path [#1] (\tikzinputsegmentfirst)
							.. controls
							(\tikzinputsegmentsupporta) and (\tikzinputsegmentsupportb)
							..
							(\tikzinputsegmentlast);
						},
					closepath code={
							\path [#1]
							(\tikzinputsegmentfirst) -- (\tikzinputsegmentlast);
						},
				},
		},
	mid arrow/.style={postaction={decorate,decoration={
							markings,
							mark=at position .5 with {\arrow[#1]{stealth}}
						}}},
}
\begin{document}

\maketitle

\begin{abstract}
	In this paper, we study a functional programming approach to natural language
	semantics, allowing us to increase the expressiveness of a more traditional
	denotation style.
	We will formalize a category based type and effect system to represent the
	semantic difference between syntactically equivalent expressions.
	We then construct a	diagrammatic calculus to model parsing and handling of
	effects, providing a method to efficiently compute the denotations for
	sentences.
\end{abstract}

\section{Introduction}
What is \emph{a chair}? How do I know that \emph{Jupiter, a planet}, is
\emph{a planet}?
To answer those questions, \cite{bumfordEffectdrivenInterpretationFunctors2025}
provide a \textsc{Haskell} based view on the notion of typing in natural
language semantics.
Their main idea is to include a layer of effects which allows for improvements
in the expressiveness of the denotations used.
This allows us to model complex concepts such as anaphoras, or non-determinism in
an easy way, independent of the actual way the words are represented.
Indeed, when considering the usual denotations of words as typed lambda-terms,
this allows us to solve the issue of meaning getting lost through impossible
typing, while still being able to compose meanings properly.
When two expressions have the same syntactic distribution, they must also have
the same type, which forces quantificational noun phrases to have the same type
as proper nouns: the entity type $\e$.
However, there is no singular entity that is the referent of \emph{every
	planet}, and so, the type system gets in the way of meaning, instead of
serving it.

\smallskip

Our formalism is inscribed in the contemporary natural language semantic
theories which are based on three main elements: a \emph{lexicon}, a
\emph{syntactic description} of the language, and a theory of
\emph{composition}.
More specifically, we explain how to extend the domain of the lexicon and the
theory of composition to account for the phenomena described above.
We will not be discussing most of the linguistic foundations for the usage of
the formalism, nor its usefulness.
We refer the reader to \cite{bumfordEffectdrivenInterpretationFunctors2025} to
get an overview of the linguistic considerations that are the base of the
theory.

\smallskip

In this paper, we will provide a formal definition of an enhanced type and
effect system for natural language semantics, based on categorical tools.
This will increase the complexity (both in terms of algorithmic operations and
in comprehension of the model) of the parsing algorithms, but through the use
of string diagrams to model the effect of composition on potential effectful
denotations (or more generally computations), we will provide efficient
algorithms for computing the set of meanings of a sentence, from the meaning of
its components.

\section{Related Work}
This is not the first time a categorical representation of compositional
semantics of natural language is proposed,
\cite{coeckeMathematicalFoundationsCompositional2010} already suggested an
approach based on monoidal categories using an external model of meaning.
What our approach gives more, is additional latitude for the definition of
denotations in the lexicon, and a visual explanation of the difference between
multiple possible parsing trees.
The proposition of \cite{toumiHigherOrderDisCoCatPeirceLambekMontague2023} is
closer to our proposition on graphical aspects, but still has the limits of
using an external model of meaning while ours expands on the use of an expanded
model of computation.
We will go back later on our more abstract way of looking at the semantic
parsing of a sentence.

\smallskip

On a completely different approach,
\cite{marcollimatildeetchomskynoametberwickrobertc.MathematicalStructureSyntactic}
provide a categorical structure based on Hopf algebra and coloured operads
to explain their model of syntax, leading to results at the interface of syntax
and morphology presented in \cite{senturiaAlgebraicStructureMorphosyntax2025}.
Similarly, \cite{melliesCategoricalContoursChomskySchutzenberger2025} provides
a modeling of CFGs using coloured operads.
Our approach is based on the suggestion that merge in syntax can be done using
labels, independent on how it is mathematically modelled.

\section{Categorical Semantics of Effects: A Typing System}
In this section, we will formalize a type system underlying the theory proposed
in \cite{bumfordEffectdrivenInterpretationFunctors2025}.
To do so, we will designate by $\mL$ our language, as a set of words
(with their associated meaning/denotation) and syntactic rules underlying
the semantic combination.
The absence of syntactic rules is allowed, although it partly defeats the
purpose of this work.
This might be useful when proposing compositional models of learned
representations.

We will use $\mF\left( \mL \right)$ to denote the set of functors or
higher-order functions used in denotations of $\mL$.
Those are chosen when representing the language (see Figure \ref{fig:functors}
for examples), and should be additions to a simpler semantic theory.
Our goals here are to describe more formally, using a categorical vocabulary,
the environment in which the typing system for our language will exist, and how
we connect words and other linguistic objects to the categorical formulation.

\subsection{Typing Category}
\subsubsection{Types}
Let $\mC$ be a closed cartesian category representing the
domain of types for the domains and co-domains of uneffectful denotations.
$\mC$ is our \emph{main} typing system, consisting of types for words
that can be expressed without effects (see Figure \ref{fig:lexicon} for an
example).
The terminal object $\bot$ of $\mC$ represents the empty type or the lack
thereof.
We consider as our typing category $\bar{\mC}$ the categorical closure for
exponentials and products of $\mF\left( \mL \right)^{*}\left(\mC\right)$,
which consists of all the different type constructors (ergo, functors) that
could be formed in the language.
In that setting our types are those that can be attained from a finite number
of functorial applications from an object of $\mC$.

Since $\mF\left( \mL \right)$ only induces a preorder on
$\Obj\left( \bar{\mC} \right)$, we consider the relation on types
$x\succeq y \Leftrightarrow \exists F, y = F(x)$ (which should be seen
as a subtyping relation as proposed in \cite{melliesFunctorsAreType2015}).
We then consider for our types the quotient set
$\star = \Obj\left(\bar{\mC}\right) / ~$ where $~$ is the transitive closure of
the subtyping relationship induced by functorial application.
We also define $\star_{0}$ to be the subset of types containing only
uneffectful types, i.e. $\Obj(\mC)$.
In contexts of polymorphism, we identify $\star_{0}$ to the adequate subset of
$\star$.
In this paradigm, constant objects (or results of fully handled computations) are
functions with type $\bot \to \tau$ which we will denote directly by
$\tau \in \star_{0}$.
This will be useful when defining base combinators in Section \ref{sec:parsing}.

\subsubsection{Functors, Applicatives and Monads}
Our point of view has us consider \emph{language functors}\footnote{Words with
	denotations in $\mF(\mL)$ which represent denotationally effectful
	constructions, e.g. "a" or "the". They are to be considered with opposition to
	the \emph{type functors} which are the mathematical construct in $\mF(\mL)$.}
as polymorphic functions: for a
(possibly restrained) set of base types $S$, a functor is seen as a function:
\begin{equation*}
	x: \tau\in S\subseteq \star \mapsto F x: F\tau
\end{equation*}
This means that if a functor can be applied to a type, it can also be applied
to all \emph{affected} versions of that type, i.e.
$\mF\left( L \right)(\tau\in \star)$.
This gives us two typing judgements for the functor $F$:
\begin{equation*}
	\frac{\Gamma\vdash x: \tau \in \star_{0}}{\Gamma\vdash F x: F\tau \notin
		\star_{0}} \hspace{2cm} \frac{\Gamma\vdash x:
		\tau}{\Gamma\vdash Fx : F\tau\preceq \tau}
\end{equation*}
We use the same notation for the \emph{language functor} and the
\emph{type functor} in the examples, but it is important to note those are two
different objects, although connected.
More precisely, the \emph{language functor} is to be seen as a function whose
computation yields an effect, while the \emph{type functor} is the endofunctor
of $\bar{\mC}$ (so a functor from $\mC$) that represents the effect in our
typing category.
Examples of this difference are to be found in Figures \ref{fig:lexicon} and
\ref{fig:functors}.

\smallskip

In this regard, applicatives and monads only provide with more flexibility on
the ways to combine functions:
they provide intermediate judgements to help with the combination of trees.
For example, the multiplication of the monad provides a new \emph{type
	transformation} judgement allowing derivation of $M\tau$ from$MM\tau$.
This is a special case of the natural transformation rule that we
define in the next section.

\subsubsection{Natural Transformations}
We could add judgements directly for adjunctions and monads, but we generalize
by adding judgements for natural transformations, as adjunctions and monadic
rules are natural transformations which arise from \emph{natural} settings.
While in general we do not want to create natural transformations, we want to be
able to express these in three situations:
\begin{enumerate}
	\item Adjunctions, Monads and Co-Monads\footnote{Which are actually the same
		      thing.}.
	\item To deal with the resolution of effects as explained in Section
	      \ref{sec:nondet}
	\item To create \emph{higher-order} constructs which transform words from our
	      language into other words, while keeping the functorial aspect.
	      This idea is developed in Section \ref{par:higherorder}.
\end{enumerate}
To see why we want this rule, which is a larger version of the monad
multiplication and the monad/applicative unit, it suffices to see that the
diagram defining the properties of a natural transformation provides a way
to construct the \emph{correct function} on the \emph{correct functor} side of
types.
From a linguistic point of view, natural transformations allow us to reason
directly about type coercions and their coherence in the typing system,
whether that is transporting effects across functors as in Section
\ref{par:higherorder} or collapsing nested effects and more generally handling
them as presented in Section \ref{par:handlers} and \ref{sec:nondet}.

\smallskip

In the Haskell programming language, any polymorphic function is
a natural transformation from the first type constructor to the second type
constructor, as proved in \cite{wadlerTheoremsFree1989}.
This will guarantee for us that given a \emph{Haskell} construction for a
polymorphic function, we will get the associated natural transformation.

\paragraph{Handlers}
\label{par:handlers}
As introduced by \cite{marsikAlgebraicEffectsHandlers}, the notion of handlers
is to be considered as the way to solve effects that obfuscate the result of a
computation.
Following \cite{wuEffectHandlersScope2014}, we understand handlers as natural
transformations describing the resolution of an algebraic effect: they are
natural transformations from the effect to the identity functor, effectively
resolving them.
Considering handlers this way allows us to directly handle our computations
inside our typing system and in particular inside our parsing algorithm.
This process will mostly be described in Sections \ref{sec:nondet} and
\ref{sec:parsing}.

\smallskip

To define a handler $h$, we will only require that for any applicative functor
of unit $\eta$, $h\circ \eta = \id$.
This solves the issue of non-termination of the system.
Note that the choice of the handler being part of the lexicon or the parser
over the other is a philosophical question more than a semantical one, as both
options will result in semantically equivalent models, the only difference will
be in the way we consider the resolution of effects.
This choice does not arise in the case of the adjunction-induced
handlers.
Indeed here, the choice is caused by the non-uniqueness of the choices for
the handlers as two different speakers may have different ways to resolve the
non-determinism effect that arises from the phrase \textsl{A chair}.
This is the difference with the adjunctions: adjunctions are intrinsic
properties of the coexistence of the effects, while the handlers
are user-defined.

\paragraph{Higher-Order Constructs}
\label{par:higherorder}
Functors may also be used to add plurals, superlatives, tenses, aspects and
other similar constructs which act as function modifiers.
For each of these, we give a functor $\Pi$ corresponding to a new class of
types along with natural transformations for all other functors $F$ which
allows to propagate down the high-order effect.
This allows us to add complexity not in the compositional aspects but
in the model of the language, by simply saying that those constructs are
predicate modifiers passed down (with or without side effects) to the arguments
of predicates:
\begin{equation*}
	\begin{aligned}
		\mathbf{future\left( be \right)\left( arg_{1}, arg_{2} \right)}
		 & \xrightarrow{\eta} \mathbf{future\left( be \right)\left( arg_{2} \right)\left( future\left( arg_{1} \right) \right)}                           \\
		 & \xrightarrow{\eta} \mathbf{future \left( be \right) \left( future \left( arg_{2} \right) \right) \left( future \left( arg_{1} \right) \right)}
	\end{aligned}
\end{equation*}

Among other higher-order constructs that might be represented using effects are
scope islands, which could be modelled by a functor that cannot be
passed as argument to words that would otherwise need a closure to be applied
first.
See Figure \ref{fig:tree-rain} for an example, based on theory presented in
\cite{bumfordEffectdrivenInterpretationFunctors2025}, Section 5.4.

The term ''\emph{higher-order construct}'' comes from the idea that those
constructs are not generated by words but at the scale of the sentence, or even
the syntax in the case of \emph{scope islands}.
As such, we will say that this type of functors are \emph{external} to the
lexicon.

\subsection{Typing Judgements}\label{subsec:judgements}
To complete this section, Figure \ref{tab:judgements} gives a simple list of different typing composition judgements through which we also re-derive the subtyping judgement to allow for its implementation.
\begin{figure}
	\input{typing-judgements}
	\caption{Typing and subtyping judgements for implementation of effects in the
		type system.}
	\label{tab:judgements}
\end{figure}
Note that here, the syntax is not taken into account: a function is always written left of its arguments, whether or not they are actually in that order in the sentence.

\smallskip

Using these typing rules for our semantic parsing steps, it is important to
see that our grammar will still bear ambiguity.
The next sections will explain how to reduce this ambiguity in short enough
time.

Moreover, our current typing system is not decidable, because of the
\texttt{nat/pure/return} rules which may allow for unbounded derivations.
This is not actually an issue because of the considerations on handling, as
semantically void units will get removed at that time.
Indeed, from the property of handlers adding a unit and not modifying the
effect before it is handled does not change anything to the result and will be
removed.
This leads to derivations of sentences to be of bounded height, linear in the
length of the sentence.

\section{Handling Ambiguity}
\label{sec:nondet}
The typing judgements proposed in Section \ref{subsec:judgements} lead to
ambiguity.
In this section we propose ways to get our derivations to a certain normal
form, by deriving an equivalence relation on our derivation and parsing trees,
based on string diagrams.

\subsection{String Diagram Modelisation of Sentences}
\label{subsec:sd}
String diagrams are the Poincaré duals of the usual categorical diagrams when
considered in the $2$-category of categories and functors.
This means that we represent categories as regions of the plane, functors as
lines separating regions and natural transformations as the intersection points
between two lines.

We will always consider application as applying to the right of the line so
that composition is written in the same way as in equations.
This gives us a new graphical formalism to represent our effects using a few
equality rules between diagrams.
The commutative aspect of functional diagrams is now replaced by an equality of
string diagrams, which will be detailed in the following section.

We get a way to visually see the meaning get reduced from effectful composition
to propositional values, without the need to specify what the handler does.
This delimits our usage of string diagrams as ways to look at computations and
a tool to provide equality rules to reduce ambiguity.

\begin{wrapfigure}[17]{r}{.45\textwidth}
	\centering
		\begin{tikzpicture}
		\path coordinate[dot, label=right:$\w{the}$] (the) + (0, 1) coordinate[dot, label=left:$\w{sleeps}$] (sleeps) + (0, 2) coordinate[label=above:$\t$] (bool)
		++(-2, 1) coordinate (ctlthe) + (0, 1) coordinate[label=above:$\f{M}$] (effthe)
		++(2, -2) coordinate[dot, label=left:$\w{cat}$] (cat) + (0, -2) coordinate[label=below:$\bot$] (bot);
		\draw (cat) -- (the) -- (sleeps) -- (bool);
		\draw[name path=effect] (the) to[out=180, in=-90] (ctlthe) -- (effthe);
		\draw[dashed] (bot) -- (cat);
		\begin{pgfonlayer}{background}
			\fill[catone] (bot) rectangle ($(bool) + (1, 0)$);
			\fill[catmca] (bot) rectangle ($(effthe) + (-1, 0)$);
			\fill[catmc] (the) to [out=180, in=-90] (ctlthe) -- (effthe) -- (bool) -- (the);
		\end{pgfonlayer}
	\end{tikzpicture}
	\caption{String diagram for the sentence \textsl{the cat sleeps}.}
	\label{fig:sd-thecatsleeps}
\end{wrapfigure}
Let us define the category $\mathds{1}$ with exactly one object and one arrow:
the identity on that object. It will be shown in grey in the string diagrams
below.
A functor of type $\mathds{1} \to \mC$ is equivalent to choosing an object in
$\mC$, and a natural transformation between two such functors $\tau_{1},
	\tau_{2}$ is exactly an arrow in $\mC$ of type $\tau_{1} \to \tau_{2}$.
Knowing that allows us to represent the type resulting from a sequence of
computations as a sequence of strings whose farthest right represents an object
in $\mC$, that is, a base type.

In the diagram of Figure \ref{fig:sd-thecatsleeps}, each string corresponds to
a functorial effect or type layer applied during parsing.
The base type string $\t$ is at the border of the gray area and is the one of
the uneffectful denotation in $\mC$ while the \emph{functorial} string for
$\f{M}$ introduces the effect for optionality and possible failure of the
computation.
The question of providing rules to compose the string diagrams for parts of the
sentences will be discussed in the next section, as it is related to parsing.

\smallskip

In the end, we will have the need to go from a certain set of strings (the effects that applied) to a single one, through a sequence of handlers, monadic and comonadic rules and so on.
Notice that we never reference the zero-cells and that in particular their colors are purely an artistical touch.

\subsection{Achieving Normal Forms}
We will now provide a set of rewriting rules on string diagrams (written as
equations) which define the set of different possible reductions.

First, Theorem \ref{thm:isotopy} reminds the main result by \cite{joyalGeometryTensorCalculus1991} about string diagrams which shows that our artistic representation of diagrams is correct and does not modify the equation or the rule we are presenting.
\begin{theorem}[Theorem 1.2 \cite{joyalGeometryTensorCalculus1991}]
	\label{thm:isotopy}
	String diagrams equivalent under planar isotopy in the graphical language are equal.
\end{theorem}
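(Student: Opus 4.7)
The plan is to follow the original Joyal--Street strategy, proving invariance of the algebraic expression associated to a string diagram under an arbitrary planar isotopy by reducing to a finite list of elementary moves. First I would make precise what a string diagram is in this setting: a progressive labelled graph embedded in a rectangle $[0,1]\times[0,1]$, where $0$-cells (regions) are labelled by categories, $1$-cells (strings) by functors transverse to the horizontal foliation, and $0$-dimensional vertices by natural transformations, with the top and bottom boundaries giving the source and target $1$-composites. To such a diagram one associates an element of the $2$-category by slicing horizontally at regular values and composing: within each horizontal slice one reads a horizontal composite of functors and natural transformations, and these slices are composed vertically.

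Second, I would show that this associated expression does not depend on the height at which one slices, provided no vertex lies on the cut. This is essentially the observation that moving a horizontal line past a region with no vertices changes the expression only by whiskering identities, which is neutral. Third, I would analyse what happens when an isotopy moves two vertices past each other in height: if they sit in disjoint vertical strips, the interchange law of the $2$-category guarantees that the two orderings yield the same $2$-cell; if they sit at the same horizontal position, the isotopy is forbidden from making them coincide (the graph must remain embedded), so this case does not arise. Horizontal translation of a single vertex within its strip corresponds to naturality of whiskering and so also preserves the expression.

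Fourth, I would invoke a genericity argument: any ambient planar isotopy between two admissible diagrams can be approximated by one whose critical events (moments when two vertices share a height, or a string becomes non-transverse to the horizontal foliation) are finite in number and of the elementary types analysed above. This is the standard Morse-theoretic decomposition used throughout the Joyal--Street paper; I would just cite it rather than rebuild it. Concatenating invariance across each elementary event then yields equality of the algebraic values at the two endpoints of the isotopy.

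The main obstacle, and the step I would dwell on, is the decomposition of a generic isotopy into elementary moves: one must be careful that the isotopy respects the progressiveness condition (strings remain transverse to horizontal lines) except at finitely many Morse critical points, and that these critical points are exactly the interchange and whiskering moves above. Once this decomposition is in place, the algebraic verification of each move is immediate from the $2$-category axioms, and the theorem follows. Since the present paper uses the result only to justify drawing diagrams up to isotopy, I would simply reference \cite{joyalGeometryTensorCalculus1991} for the full technical details.
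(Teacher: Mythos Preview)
The paper does not prove this theorem at all: it is stated purely as a citation of Joyal--Street's result, with no argument given, and is invoked only to justify that isotopic redrawings of diagrams denote the same $2$-cell. Your proposal therefore goes well beyond what the paper does; what you have written is a faithful outline of the original Joyal--Street argument (progressive diagrams, horizontal slicing, interchange for vertices passing in height, and a Morse-style decomposition of a generic isotopy into elementary moves), and your closing remark that you would ultimately defer to \cite{joyalGeometryTensorCalculus1991} for the technical details matches exactly what the paper itself does in full.
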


A few equations on string diagrams also arise from the commutation of certain
class of diagrams and thus typing judgements.
We consider the \emph{snake} equations are a rewriting of the categorical
diagrams which are the defining properties of an adjunction and the
\emph{(co-)monadic} equations are the string diagrammatic translation of the
properties of unitality and associativity of monads.
These equations (and the reduction rules from Section \ref{subsec:rewrite})
explain all the different reductions that can be made to limit non-determinism
in our parsing and handling strategies.

\subsection{Computing Normal Forms}
Now that we have a set of rules telling us what we can and cannot do in our
model while preserving the equality of the diagrams, we provide a combinatorial
description of our diagrams to help compute the possible equalities between
multiple reductions of a sentence meaning.
In this section we formally describe the data structure we propose, as well as
proving our system of rewriting allows us to compute normal forms for our
diagrams.

\subsubsection{Representing String Diagrams}
We follow \cite{delpeuchNormalizationPlanarString2022} in their combinatorial
description of string diagrams.
We describe a diagram by an ordered set of its $2$-cells (the natural
transformations, including handlers of the diagram) along with the number of
input strings, for each $2$-cell we log the following information:
\begin{itemize}
	\item Its horizontal position: the number of strings that are right of it.
	\item Its type: an array of effects that are the inputs to the natural
	      transformation and an array of effects that are the outputs to the
	      natural transformation.
\end{itemize}
We will then write a diagram $D$ as a tuple of $3$ elements:
$\left( D.N, D.S, D.L \right)$ where $D.N$ is a positive integers representing the
height (or number of nodes) of $D$, $D.S$ is an array for the input strings of $D$ and
where $D.L$ is a function which takes a natural number smaller than $D.N - 1$ and
returns its type as a tuple of arrays
$nat = \left( \dnlg{nat}, \dnin{nat}, \dnout{nat} \right)$.
This gives a naïve algorithm in polynomial time to check if a string diagram is
valid or not.

\smallskip

Because our representation contains strictly more information (without slowing
access by a non-constant factor) than the one it is based on, our
data structure supports the linear and polynomial time algorithms proposed with
the structure by \cite{delpeuchNormalizationPlanarString2022}.
In particular our structure can be normalized in time
$\O\left( n \times \sup_{i} \abs{\dnin{\dlb{D}{i}}} + \abs{\dnout{\dlb{D}{i}}}
	\right)$, which depends on our lexicon but most of the times will be linear
time.

\subsubsection{Equational Reductions}
We are faced a problem when computing reductions using the equations for our diagrams
which is that by definition, an equation is symmetric.
To solve this issue, we only use equations from left to right to reduce as much as
possible our result instead.
Moreover, note that all our reductions are either incompatible or commutative, which
leads to a confluent reduction system, and the well definition of our normal forms.
\begin{theorem}[Confluence]
	\label{thm:confluence}
	Our reduction system is confluent and therefore defines normal forms:
	\begin{enumerate}
		\item Right reductions are confluent and therefore define \emph{right} normal forms for
		      diagrams under the equivalence relation induced by exchange.
		\item Equational reductions are confluent and therefore define \emph{equational}
		      normal forms for diagrams under the equivalence relation induced by exchange.
	\end{enumerate}
\end{theorem}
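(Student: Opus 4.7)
The plan is to apply a Newman-style argument (termination plus local confluence implies global confluence) after being careful to work modulo the exchange equivalence induced by planar isotopy. Since equations are only fired left-to-right, the system is a well-defined rewrite system on the combinatorial data $(D.N, D.S, D.L)$ introduced above.

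First I would establish strong termination. To every diagram I assign a lexicographic weight whose primary component is the number $D.N$ of $2$-cells and whose secondary component is the total length $\sum_{i < D.N - 1} \left( \abs{\dnin{\dlb{D}{i}}} + \abs{\dnout{\dlb{D}{i}}} \right)$. Every right-oriented equation either eliminates a $2$-cell (snake collapses, monadic-unit killings, handler absorptions via $h \circ \eta = \id$) or, for the associativity-type rewrites from Section \ref{subsec:rewrite}, strictly decreases the secondary component. Since this order is well-founded on $\N^{2}$, no infinite reduction chain can exist.

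Next I would prove local confluence by a critical-pair analysis. Given two redexes applicable to a single diagram $D$, I split on whether their supports (the $2$-cells they touch) are disjoint or share a cell. Disjoint redexes trivially commute; applying one leaves the other untouched, and exchange merely shuffles lateral positions, so the two end states agree by Theorem \ref{thm:isotopy}. For overlapping redexes I enumerate critical pairs among the short list of rewrite rules (snake, monadic unit, monadic associativity, handler, and the rules from Section \ref{subsec:rewrite}) and exhibit a common reduct for each, again up to planar isotopy. Confluence of part (1) follows by applying Newman's lemma to equivalence classes modulo exchange. For part (2), where we further quotient by the equational closure, I would invoke Huet's Church-Rosser-modulo criterion: local confluence modulo exchange, plus local coherence (any rewrite from $D$ lifts to a rewrite from any diagram equivalent to $D$ with equivalent results), combined with termination, yields uniqueness of equational normal forms. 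Local coherence is immediate because every rewrite rule's applicability depends only on the vertical neighbourhood of its redex, whereas exchange acts purely by lateral reorganisation.

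I expect the critical-pair analysis, in particular the interaction between snake reductions (which create or destroy cap/cup pairs around an adjunction) and the monadic unit/associativity equations (involving $\eta$ and $\mu$), to be the main obstacle. A snake collapse sitting next to a $\mu$-redex can commute with it only after intermediate reassociations, so joining such a pair requires several rewrite steps rather than a direct swap, and one must distinguish whether the $\eta$ introduced by the snake is the same unit consumed by a subsequent handler or multiplication step. Once each critical pair is shown to converge (up to isotopy), the remainder of the proof is bookkeeping.
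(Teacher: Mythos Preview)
Your approach is considerably more elaborate than the paper's. For part~(1) the paper does no fresh analysis at all: it simply invokes Theorem~4.2 of \cite{delpeuchNormalizationPlanarString2022}, whose combinatorial representation the present data structure extends. For part~(2) the paper's argument is essentially two sentences: termination holds because every equational rule except monadic associativity deletes $2$-cells, and associativity is oriented in a single direction; confluence is then declared ``obvious'' because node-deleting rules cannot interfere with one another and associativity is the only remaining case. There is no critical-pair enumeration and no appeal to Huet's criterion; the paper also notes that right reductions leave equational redexes intact, so the two normalisations compose without further work. Your Newman-style route would, if completed, buy a genuinely rigorous confluence proof where the paper offers only a sketch, at the cost of a case analysis the paper deems unnecessary.

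That said, your termination measure does not work as stated. Monadic associativity $\mu(\mu(TT),T) \to \mu(T,\mu(TT))$ neither removes a $2$-cell nor changes the total arity $\sum_{i}\bigl(\abs{\dnin{\dlb{D}{i}}} + \abs{\dnout{\dlb{D}{i}}}\bigr)$, since both sides contain two copies of $\mu$ with identical interface sizes; your lexicographic weight is therefore preserved and the well-foundedness argument collapses. You would need a further component tracking something like the horizontal position of the outer $\mu$, which is precisely what the paper's one-directional orientation exploits implicitly. Separately, the rules you pull in from Section~\ref{subsec:rewrite} are not in scope for this theorem: they are introduced only later, for combinator-level parsing diagrams, and are described there as yielding an ``improved version'' of the present result. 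Folding them into the proof here both misstates what is being proved and inflates the critical-pair analysis you anticipate as the main obstacle.
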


Before proving the theorem, let us first provide the reduction rules for the
different equations for our description of string diagrams.
\begin{description}
	\item[The Snake Equations]
	      First, let's see when we can apply the equation for $\id_{L}$ to a
	      diagram $D$ which is in \emph{right} normal form, meaning it's been
	      right reduced as much as possible.
	      Suppose we have an adjunction $L \dashv R$.
	      Then we can reduce $D$ along the equation at $i$ if, and only if:
	      \begin{itemize}
		      \item $\dnlg{\dlb{D}{i}} = \dnlg{\dlb{D}{i + 1}} - 1$
		      \item $\dlb{D}{i} = \eta_{L, R}$
		      \item $\dlb{D}{i + 1} = \epsilon_{L, R}$
	      \end{itemize}
	      This comes from the fact that we can't send either $\epsilon$
	      above $\eta$ using right reductions and
	      that there cannot be any natural transformations between the two.
	      Obviously the equation for $\id_{R}$ works the same.
	      Then, the reduction is easy: we simply delete both strings,
	      removing $i$ and $i + 1$ from $D$ and reindexing the other nodes.
	\item[The Monadic Equations] For the monadic equations, we only use
	      the unitality equation as a way to reduce the number of natural
	      transformations, since the goal here is to attain normal forms
	      and not find all possible reductions.
	      We ask that associativity is always used in the direct
	      sense $\mu\left( \mu\left( TT \right),T \right) \to \mu\left(
		      T\mu\left( TT \right) \right)$ so that the algorithm terminates.
	      We use the same convention for the comonadic equations.
	      The validity conditions are as easy to define for the monadic
	      equations as for the \emph{snake} equations when considering
	      diagrams in \emph{right} normal forms.
	      Then, for unitality we simply delete the nodes
	      and for associativity we switch the horizontal
	      positions for $i$ and $i + 1$.
\end{description}

\begin{proof}[Proof of the Confluence Theorem]
	The first point of this theorem is exactly Theorem 4.2
	in \cite{delpeuchNormalizationPlanarString2022}.
	To prove the second part, note that the reduction process terminates as
	we strictly decrease the number of $2$-cells with each reduction.
	Moreover, our claim that the reduction process is confluent is obvious
	from the associativity equation and the fact the other
	equations delete nodes.
	Since right reductions do not modify the equational reductions, and thus
	right reducing an equational normal form yields an equational normal form,
	combining the two systems is done without issue, completing our proof of
	Theorem \ref{thm:confluence}.
\end{proof}

\begin{theorem}[Normalization Complexity]
	\label{thm:norm}
	Reducing a diagram to its normal form is done in polynomial time in
	the number of natural transformations in it.
\end{theorem}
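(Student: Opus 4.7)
The plan is to combine the polynomial-time right-normalization algorithm of \cite{delpeuchNormalizationPlanarString2022} with a polynomially bounded sequence of equational reduction steps, each of which is detected and applied in low-degree polynomial cost on a right-normal representation.

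First, I would invoke the right-normalization procedure recalled just before the theorem, which runs in time $\O\bigl(n \cdot \sup_i (|\dnin{\dlb{D}{i}}| + |\dnout{\dlb{D}{i}}|)\bigr)$, where $n = D.N$. After this pass the validity conditions given in the description of the snake, unitality, and associativity reductions become constant-time local checks on consecutive pairs $(\dlb{D}{i}, \dlb{D}{i+1})$, so a single left-to-right scan locates an applicable equational reduction in $\O(n)$ time. Applying such a reduction itself costs $\O(n)$, since it only deletes a pair of nodes or swaps two horizontal positions and then reindexes.

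Next I would bound the total number of equational reduction steps. Snake reductions and unitality reductions each strictly decrease $D.N$, so together at most $n$ of them can ever occur. For associativity, which preserves $D.N$, I would introduce a monovariant: for each $\mu$-node of the diagram, count the number of $\mu$-nodes that appear in the subtree hanging off its left input, and take the sum over all $\mu$-nodes. The directed rule $\mu(\mu(TT),T) \to \mu(T\mu(TT))$ strictly decreases this potential by one at each application, and the potential is bounded by $\O(n^2)$, hence only polynomially many associativity steps occur. Iterating, after each equational reduction we re-right-normalize so that the validity conditions remain local; the polynomial cost of right-normalization is paid only polynomially often, and by the confluence established in Theorem \ref{thm:confluence} the order of reductions does not affect the final output.

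The main obstacle is the associativity bound: unlike the other equational rules, associativity does not shrink the diagram, so one cannot use $D.N$ itself as a termination measure and the polynomial bound genuinely requires a monovariant of the kind above (or an equivalent lexicographic argument on the shapes of monadic subtrees). Once that monovariant is in hand, the remainder of the proof is a routine composition of polynomial-time subroutines, with a total complexity polynomial in $n$ and in the maximal arity $\sup_i(|\dnin{\dlb{D}{i}}| + |\dnout{\dlb{D}{i}}|)$ of the 2-cells.
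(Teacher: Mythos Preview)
Your proposal is correct and follows the same overall strategy as the paper: right-normalize, then iterate equational reductions whose applicability is a local check on consecutive nodes, bounding the total number of steps and the cost per step.

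There are two differences worth noting. First, the paper asserts that the total number of reductions is \emph{linear} in the number of $2$-cells, treating associativity on the same footing as the node-deleting rules without further argument; you instead introduce an explicit $\O(n^{2})$ monovariant for the directed associativity rule, which is more careful and clearly correct (the paper's linear claim is stated rather than argued). Second, the paper right-normalizes only once before the equational pass and once after, whereas you re-right-normalize after every equational step to keep the validity conditions local; your version is again safer, since deleting or swapping nodes can break right-normal form and hence the constant-time checks the paper relies on. The net effect is that your argument is more rigorous but yields a looser polynomial bound, while the paper's proof is terser and claims a quadratic bound at the cost of leaving these two points implicit.
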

\begin{proof}
	Let's now give an upper bound on the number of reductions.
	Since each reductions either reduces the number of $2$-cells or applies the
	associativity of a monad, we know the number of reductions is linear in the
	number of natural transformations.
	Moreover, since checking if a reduction is possible at height $i$ is done in
	constant time, checking if a reduction is possible at a step is done in
	linear time, rendering the reduction algorithm quadratic in the number of
	natural transformations.
	Since \emph{right} normalizing in linear time before to ensure we get all
	equational reductions and after to complete the reduction is enough,
	we have a polynomial time algorithm.
\end{proof}

\section{Efficient Semantic Parsing}
\label{sec:parsing}
In this section we explain our algorithms and heuristics for efficient semantic
parsing with as little ambiguity as possible, and reducing time complexity
of our parsing strategies.

\subsection{Syntactic-Semantic Parsing}
Using a naïve strategy of type checking on syntax trees yields an exponential
algorithm.
To avoid that, we extend the grammar system used to do the syntactic part of
the parsing to include semantic combination of words.
In this section, we will take the example of a CFG since it suffices to create
our typing combinators,
In Figure \ref{fig:combination-cfg}, we explicit a grammar of combination
modes, based on \cite{bumfordEffectdrivenInterpretationFunctors2025} as it
simplifies the rewriting of our typing judgements in a CFG.

\begin{figure}
	\input{combination-cfg}
	\caption{Possible type combinations in the form of a near CFG. Here, $a, b\in \star_{0}$, $\alpha, \beta, \tau \in \star$ and $\f{F}, \f{L}, \f{R} \in \mF(\mL)$ with $\f{L} \dashv \f{R}$.}
	\label{fig:combination-cfg}
\end{figure}

This grammar works in five major sections:
\begin{enumerate}
	\item We reintroduce the grammar defining the type and effect system.
	\item We introduce a structure for the semantic parse trees and their labels,
	      based on the combination modes from
	      \cite{bumfordEffectdrivenInterpretationFunctors2025}.
	\item We introduce rules for basic type combinations.
	\item We introduce rules for higher-order unary type combinators.
	\item We introduce rules for higher-order binary type combinators.
\end{enumerate}

Each of these combinators can be, up to order, associated with a inference
rule, and, as such, with a higher-order denotation, which explains the actual
effect of the combinator, and are described in
Figure \ref{fig:combinator-denotations}.

\begin{figure}
	\input{combinator-denotations}
	\caption{Denotations describing the effect of the combinators used in the
		grammar describing our combination modes presented in
		Figure \ref{fig:combination-cfg}}
	\label{fig:combinator-denotations}
\end{figure}

The main reason why denotations associated to combinators are needed, is to
properly define how they actually do the combination of denotations.
Those denotations are a direct translation of the judgements defining the
notions of functors, applicatives, monads and thus are not specific to any
denotation system, even though we use lambda-calculus to describe them.
Some are duplicated for a left and right version to account for the fact CFGs
are not actually symmetric in their "input" unlike intuitionistic inference
rules.

This makes us able to compute the actual denotations associated to a sentence
using our formalism, as presented in Figure \ref{fig:parsing-trees}.
Note that the order of combination modes is not actually the same as the one
that would come from the grammar.

\begin{wrapfigure}[39]{l}{.45\textwidth}
	\centering
	\begin{subfigure}{.45\textwidth}
		\centering
		\resizebox{\textwidth}{!}{\begin{tikzpicture}[every tree node/.style={align=center, anchor=north}, level distance=2.5cm]
				\Tree [
				.{$\f{M}\f{D}\t$ \\ $\left\{\mathbf{eats}(\texttt{obj=}m, \texttt{subj=}c) \middle| \w{mouse}(m)\right\}$ if $\mathbf{cat}^{-1}(\top) = \{c\}$ \\ $\combMR_{\f{M}}\combML_{\f{D}}>$}
				[
				.{$\f{M}(\e)$ \\ $c$ if $\mathbf{cat}^{-1}(\top) = \{c\}$} \edge[roof]; {the cat}
				]
				[
				.{$\f{D}(\e \to \t)$ \\ $\left\{\lambda s. \mathbf{eats}(\texttt{obj=}m, \texttt{subj=}s) \middle| \w{mouse}(m)\right\}$} \edge[roof]; {eats a mouse}
				]
				]
			\end{tikzpicture}}
		\caption{Labelled tree representing the equivalent parsing diagram to
			\ref{fig:parsing-diagram}}
		\label{fig:tree-eats}
	\end{subfigure}

	\begin{subfigure}{.45\textwidth}
		\centering
		\begin{tikzpicture}[every tree node/.style={align=center, anchor=north}, level distance=2cm]
			\Tree [
			.{$\f{D}\e$ \\ $\{x \mid \w{cat} x \land \w{in\ a\ box} x \} $ \\ $\combJ_{\f{D}}\combML_{\f{D}} >$}
			{$(\e \to \t) \to \f{D}\e$ \\ a}
			[ .{$\f{D}(\e \to \t)$ \\ $\lambda x. \w{cat} x \land \w{in\ a\ box} x$} \edge[roof]; {cat in a box} ] ]
		\end{tikzpicture}
		\caption{Labelled tree representing the equivalent parsing diagram to
			\ref{fig:parsing-diagram2}}
		\label{fig:tree-box}
	\end{subfigure}

	\begin{subfigure}{.45\textwidth}
		\centering
		\resizebox{\textwidth}{!}{\begin{tikzpicture}[every tree node/.style={align=center, anchor=north}, level distance=1.75cm]
				\Tree [
				.\node{\t \\ $\mathbf{if}(\forall x. \w{past}\w{pass} x)(\w{past}\mathbf{rain})$ \\ $>$};
				[ .\node{$\t \to \t$ \\ $\mathbf{if}(\forall x.\w{past}\w{pass} x)$ \\ $>$};
				{$\t \to \t \to \t$ \\ if}
				[
				.\node{$\t$\\ $\forall x. \w{pass} x$ \\ $\combDN_{\Downarrow_{\f{C}}}$};
				[ .\node{$\f{C}\t$ \\ $\lambda c.\forall x. c(\w{past}\w{pass} x)$ \\ $\combMR_{\f{C}}<$};
				{$\f{C}\e$ \\ $\lambda c. \forall x. c\, x$ \\ everyone}
				{$\e \to \t$ \\ $\w{past}\mathbf{pass}$ \\ passed}
				]
				]
				]
				[
				.{\t \\ $\w{past}\mathbf{rain}$} \edge[roof]; {it was raining}
				]
				]
			\end{tikzpicture}}
		\caption{Labelled tree representing the equivalent parsing diagram to
			\ref{fig:3dparsing-diagram}}
		\label{fig:tree-rain}
	\end{subfigure}
	\caption{Examples of labelled parse trees for a few sentences.}
	\label{fig:parsing-trees}
\end{wrapfigure}

The reason why will become more apparent when string diagrams for parsing are
introduced in the next section, but simply, this comes from the fact that while
we think of $\combML$ and $\combMR$ as reducing the number of effects on each
side (and this is the correct way to think about those), this is not actually
how its denotation works, they are actually modifying a combination mode via
their denotation.
This formalism gives us the following theorems:

\begin{theorem}
	\label{thm:ptime-parse}
	Parsing of a sentence with combination modes is polynomial in the length of
	the	sentence and the size of the type system and syntax system.
\end{theorem}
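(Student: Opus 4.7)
The plan is to reduce semantic parsing to a CYK-style dynamic programming algorithm over the near-CFG of Figure \ref{fig:combination-cfg}. Every production has at most two non-terminals on its right-hand side, so I would build a chart indexed by substring spans $[i,j]$ of the input sentence and, at each cell, store the finite set of admissible type labels paired with the combination mode witnessing their derivation. Standard CYK intuition then gives $\O(n^3)$ span/split enumerations, and the work remaining is to bound the per-cell cost.

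First I would bound the number of distinct type labels that can appear at a single cell. The labels live in $\star = \Obj(\bar{\mC})/\sim$, obtained from $\mC$ by stacking functors from $\mF(\mL)$. As noted after Figure \ref{tab:judgements}, once handler-induced reductions eliminate semantically void \texttt{pure}/\texttt{return} units, derivations have height linear in the sentence length; hence for a span of length $\ell$ only $\O(\ell)$ functorial layers need to sit above a base type of $\mC$. Writing $s$ for the combined size of the type system, $\mF(\mL)$, and the syntax rules, the count of admissible cell labels is polynomial in $(n,s)$.

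Next I would perform the CYK fill: for each span $[i,j]$ and each split point $k \in (i,j)$, iterate over pairs of admissible labels at $[i,k]$ and $[k,j]$, and for each binary combinator of Figure \ref{fig:combination-cfg} (namely $<$, $>$, $\combML_{\f{F}}$, $\combMR_{\f{F}}$, $\combA_{\f{F}}$, $\combUL_{\f{F}}$, $\combUR_{\f{F}}$, $\combC_{\f{L}\f{R}}$, $\combER_{\f{R}}$, $\combEL_{\f{R}}$) check whether the pattern applies. Each such check is a pattern match on at most one outer functor symbol, costing polynomial time in $s$. The unary combinators $\combJ_{\f{F}}$, $\combDN_{\f{C}}$, and the natural transformation rule of Figure \ref{tab:judgements} are handled by an inner closure loop on each cell, which terminates in polynomially many iterations by the same height bound. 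Multiplying the $\O(n^3)$ span/split pairs by the polynomial per-cell cost yields the claimed polynomial overall bound.

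The main obstacle is precisely the bound on admissible labels per cell: the $\texttt{nat}/\texttt{pure}/\texttt{return}$ rules could a priori stack arbitrarily many functorial layers and blow up the chart. Controlling this forces the CYK fill to be carried out on \emph{normalized} labels, for which I would invoke the confluence statement of Theorem \ref{thm:confluence}: any derivation producing a non-normal label can be replaced by an equivalent normalized one of no greater height, so restricting the chart to normal forms loses no final parse. Verifying that this normalization discipline commutes with the dynamic-programming order (so that normalized subderivations compose into normalized superderivations without revisiting completed cells) is the delicate point; once established, the polynomial bound follows from the routine CYK analysis sketched above.
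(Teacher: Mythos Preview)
Your CYK construction is, at the high level, the same idea the paper uses, but the paper packages it much more tersely: it simply forms the product grammar $G = G_s \times G_\tau$ of the syntactic generating structure $G_s$ and the type-combination grammar $G_\tau$ of Figure~\ref{fig:combination-cfg}, observes $\abs{G} = \abs{G_s}\cdot\abs{G_\tau}$ with $\abs{G_\tau} = \O\bigl(\abs{\mF(\mL)} + \abs{\Obj(\mC)}\bigr)$, and then appeals to the fact that membership for the product is polynomial whenever membership for $G_s$ is. In particular, the paper treats $\abs{G_\tau}$ as one of the size parameters appearing in the statement of the theorem and never attempts to bound how many concrete type labels can occur in a chart cell; that quantity is absorbed into ``size of the type system''.

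You, by contrast, try to derive that per-cell bound from scratch, and this is where a gap appears. From ``derivations have height linear in the sentence length'' you infer that only $\O(\ell)$ functorial layers sit above a base type, and then assert that the count of admissible labels at a cell is polynomial in $(n,s)$. That inference does not hold: with $\abs{\mF(\mL)} \ge 2$ functors available, the number of functor sequences of length $\O(\ell)$ is $\abs{\mF(\mL)}^{\O(\ell)}$, which is exponential in $\ell$. Invoking Theorem~\ref{thm:confluence} does not close the gap either: confluence guarantees a unique normal form \emph{within} each equivalence class of diagrams, but says nothing about the number of such classes, hence nothing about how many distinct normalized labels can decorate a cell. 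If you want to keep your more explicit route, you would need an additional argument tying the effects at a span to the words it contains (in the spirit of Lemma~\ref{lem:quad-tree}) and then bounding the relevant orderings; otherwise the cleaner move is the paper's, namely to take $\abs{G_\tau}$ as a parameter and let the product-grammar argument do the work.
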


\begin{proof}
	Suppose we are given a syntactic generating structure $G_{s}$ along with our
	type combination grammar $G_{\tau}$.
	The syntactico-semantic system $G$ constructed from the product of $G_{s}$ and
	$G_{\tau}$ has size $\abs{G_{s}}\times \abs{G_{\tau}}$.
	Computing membership of a sentence to the language generated by $G$, is then
	in polynomial time if, and only if, finding membership to the language
	generated by $G_{s}$ is done in polynomial time.
	Parsing the sentence is then done in polynomial time in the size of the
	input, $\abs{G_{s}}$ and
	$\abs{G_{\tau}} = \O\left(\abs{\mF\left(\mL\right)} + \abs{\Obj\left(\mC\right)}\right)$.
\end{proof}

\begin{theorem}
	\label{thm:ptime-denot}
	Retrieving a pure denotation for a sentence is polynomial in
	the length of the sentence, given a polynomial time syntactic parsing
	structure and polynomial combinator denotations.
\end{theorem}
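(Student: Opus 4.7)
The plan is to compose the three polynomial-time procedures already established in the paper: syntactic-semantic parsing, bottom-up combinator evaluation, and string-diagram normalization. First I would invoke Theorem \ref{thm:ptime-parse} on the input sentence $w$ of length $n$ to obtain, in polynomial time, a labelled parse tree $T$ whose internal nodes are decorated with combinators drawn from Figure \ref{fig:combination-cfg}. Under mild assumptions on the underlying syntactic grammar (e.g.\ bounded branching, as is standard for CFGs in Chomsky normal form), $T$ has $O(n)$ internal nodes and depth at most $O(n)$.

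Next I would traverse $T$ bottom-up, applying at each internal node the combinator's denotation from Figure \ref{fig:combinator-denotations} to the denotations already computed for its children. By hypothesis, each combinator denotation is computable in time polynomial in the size of its inputs, so provided the intermediate denotations can be kept of size polynomial in $n$, the whole traversal runs in polynomial time. The final extraction of a \emph{pure} value --- collapsing nested effects and stripping the semantically void units introduced by the \texttt{nat/pure/return} rules --- is then carried out by normalizing the underlying string diagram via Theorem \ref{thm:norm}, which runs in polynomial time in the number of natural transformations; that number is $O(n)$ since each combinator application introduces only a bounded number of $2$-cells.

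The main obstacle I expect is controlling the blow-up of the denotation's syntactic size across the traversal. Combinators such as $\combC_{\f{L}\f{R}}$ and $\combA_{\f{F}}$ wrap the children's denotations inside nested \fmap s and $\texttt{<*>}$s, which at first glance could cause multiplicative rather than additive growth, and therefore an exponential-size term at the root. The way around this is to represent denotations in a shared form --- either as DAGs that retain pointers to previously built subterms, or implicitly through the combinatorial string-diagram data structure of Section \ref{subsec:sd} --- so that each combinator application contributes only a constant amount of new information on top of its children. With this convention an easy induction on tree height bounds the total size of the unnormalized denotation by $O(n)$.

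Combining the polynomial bound for parsing (Theorem \ref{thm:ptime-parse}), the polynomial cost of each of the $O(n)$ combinator applications, and the polynomial cost of normalization (Theorem \ref{thm:norm}) then yields the desired overall polynomial-time procedure for producing a pure denotation.
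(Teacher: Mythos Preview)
Your three-stage decomposition --- parse via Theorem~\ref{thm:ptime-parse}, evaluate the combinator labels bottom-up, then normalise via Theorem~\ref{thm:norm} --- is exactly the paper's proof. The one place you diverge is the size bound on the labelled tree: you argue for $O(n)$ internal nodes and hence $O(n)$ combinator applications and $2$-cells, but this is too optimistic. Each syntactic merge node is decorated not with a single combinator but with a \emph{sequence} such as $\combMR_{\f{M}}\combML_{\f{D}}\,>$ (cf.\ Figure~\ref{fig:tree-eats}), and the length of that sequence scales with the number of effects accumulated so far, which itself grows linearly along the derivation. The paper isolates precisely this point as Lemma~\ref{lem:quad-tree} and obtains a \emph{quadratic} bound on the total number of combinators. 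Your conclusion survives --- quadratic is still polynomial --- but the step ``$O(n)$ syntactic nodes $\Rightarrow$ $O(n)$ combinators'' does not hold, and the paper's proof needs the lemma exactly here. Conversely, your paragraph on controlling the size of intermediate denotations via sharing is more careful than the paper, which simply invokes the ``polynomial combinator denotations'' hypothesis and does not discuss term growth at all.
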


\noindent To prove this theorem we need a short lemma on the size of the trees generated
through our structure:
\begin{lemma}
	\label{lem:quad-tree}
	Semantic parsing trees are quadratic in the length of the sentence.
\end{lemma}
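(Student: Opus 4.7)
The plan is to bound the semantic parse tree by the product of two linear quantities: the number of leaves and the height. Every leaf of a semantic parse tree corresponds to a single word (or externally-introduced higher-order construct) of the sentence, and conversely every word contributes exactly one leaf, so the tree has exactly $n$ leaves.

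Next I would establish that the height of the tree is $O(n)$. The root-to-leaf path traverses at most $n-1$ binary combination nodes coming from $>$, $<$, $\wedge$, $\vee$, one per merge in the underlying syntactic backbone. Between two consecutive binary merges on the path, one may interpose modifier nodes drawn from $\combML, \combMR, \combA, \combUR, \combUL, \combC, \combER, \combEL$ and unary nodes $\combJ_{\f F}, \combDN_{\f C}$, but each such intermediate rule consumes or produces a functorial layer at the top of one of the relevant types. The multiset of functorial layers currently present is bounded by the total number of effects introduced by words in the subtree below (plus external higher-order constructs), which is $O(n)$. The remark following Section \ref{subsec:judgements} guarantees that the \texttt{nat/pure/return} rule, even though it allows unbounded derivations in principle, does not appear in the reduced tree obtained after handling, since void units get removed. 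Hence each interposed chain is linear in $n$, and the total path length is $O(n)$.

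Finally I would invoke the elementary fact that any rooted tree with $\ell$ leaves and height $h$ has at most $\ell(h+1)$ nodes, since every node is an ancestor of some leaf and each leaf has at most $h+1$ ancestors. Instantiating with $\ell = n$ and $h = O(n)$ gives the desired $O(n^2)$ bound.

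The main obstacle is the second step, i.e. proving that no root-to-leaf path can carry an unbounded chain of modifier or unary combinators. This requires an invariant tying the length of such a chain to the number of active effects in the context, and it is here that the equational normalisation from Theorem \ref{thm:confluence} — which collapses redundant units and monadic multiplications — is essential to rule out inflationary rewritings via \texttt{pure/return} or repeated associativity reshuffling.
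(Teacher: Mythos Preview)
Your leaves-times-height strategy breaks at the height bound. From ``each interposed chain is linear in $n$'' and ``at most $n-1$ binary merges on the path'' you may conclude only that the path length is $O(n^2)$, not $O(n)$; the sentence ``hence \ldots\ the total path length is $O(n)$'' is a non sequitur. And the stronger claim is in fact false: in a fully left-branching syntactic backbone $((\ldots(w_1\,w_2)\,w_3)\ldots)\,w_n$ where no $\combJ$ is applied until the top, the left argument at the $i$-th merge already carries $\Theta(i)$ functorial layers, so the chain of $\combML/\combMR$ modifiers above that merge has length $\Theta(i)$. All of these chains lie on the root-to-$w_1$ path, giving height $\sum_{i=1}^{n-1}\Theta(i)=\Theta(n^2)$. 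Feeding that into $\ell\cdot(h+1)$ yields only $O(n^3)$, which is too weak for the lemma.

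The paper's argument avoids the height detour entirely. It observes that the unfolded tree has exactly $n-1$ binary merge nodes, and bounds the length of the combinator chain attached to the $i$-th merge by $O(i)$ (at most one of $\{\combML,\combMR,\combA,\combUL,\combUR\}$ per input effect, at most one of $\{\combJ,\combDN\}$ per output effect, and a language-dependent constant for $\{\combC,\combEL,\combER\}$). Summing $\sum_{i=1}^{n-1} O(i)=O(n^2)$ gives the total number of combinators directly. Your own intermediate observation---that the chain at a merge is bounded by the number of effects in the subtree below---is exactly the per-step ingredient the paper uses; the repair is simply to sum those chain lengths over the $n-1$ merges rather than routing through a height estimate that the chains themselves already saturate.
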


\begin{proof}
	Let $m_{\mL}$ be the maximum number of effects created by a word in $\mL$.
	Since at any step $i$ in the parsing, there can never be more than
	$m_{\mL}\times i$ effects borne by the considered inputs, there is no need
	for more than $(2 + c) \times m_{\mL}\times (i + 1) + 1$ combinators where
	$c$ is a constant dependent only on the language.
	Indeed, we will have at most one combinator among
	$\{\combML, \combMR, \combA, \combUR, \combUL\}$ per input effect, at most one
	of $\combJ$ and $\combDN$ per output effects
	($m_{\mL} \times (i + 1)$ at most),	at most a fixed number $c$ of modes
	between $\{\combC, \combEL, \combER\}$ which depends only on the number of
	adjunctions in the language.
	We get the wanted upper bound when adding the \emph{base combinator}.
	Summing the steps for $i$, we get a quadratic upper bound on the number of
	combinators and thus on the tree size.
\end{proof}

\begin{figure}
	\centering
	\begin{tikzpicture}[baseline={([yshift=-.5ex]current bounding box.center)}]
	\path coordinate[dot, label=below:$>$] (m)
	+ (0, 1) coordinate[label=above:$\beta$] (result)
	+ (-1, -1) coordinate[label=below:$\alpha \to \beta$] (phi)
	+ (1, -1) coordinate[label=below:$\alpha$] (x);
	\draw (m) -- (result);
	\draw (m) to[out=180, in=90] (phi);
	\draw (m) to[out=0, in=90] (x);
	\begin{pgfonlayer}{background}
		\fill[catmcb] (result) -- (m) -- ($(m) + (-1, 0)$) |- (result);
		\fill[catmcb] (result) -- (m) to[out=180, in=90] (phi) -- ($(phi) + (-1, 0)$) |- (result);
		\fill[catmc] (result) -- (m) -- ($(m) + (1, 0)$) |- (result);
		\fill[catmc] (result) -- (m) to[out=0, in=90] (x) -- ($(x) + (1, 0)$) |- (result);
		\fill[catmca] (m) to[out=0, in=90] (x) -- (phi) to[out=90, in=180] (m);
	\end{pgfonlayer}
\end{tikzpicture}
\begin{tikzpicture}[baseline={([yshift=-.5ex]current bounding box.center)}, yscale=-1]
	\path coordinate[dot, label=above:$\combML_{\f{F}}$] (m)
	+ (1, 1) coordinate[label=below:$\beta$] (out2)
	+ (-1, 1) coordinate[label=below:$\alpha$] (out)
	+ (-1.5, 1) coordinate[label=below:$\f{F}$] (out1)
	+ (-1, -1) coordinate[label=above:$\alpha$] (phi)
	+ (1, -1) coordinate[label=above:$\beta$] (x);
	\draw (m) to[out=180, in=90] (phi);
	\draw (m) to[out=0, in=90] (x);
	\draw (m) to[out=180, in=-90] (out1);
	\draw (m) to[out=180, in=-90] (out);
	\draw (m) to[out=0, in=-90] (out2);
	\begin{pgfonlayer}{background}
		\fill[catmcc] (m) to[out=180, in=90] (phi) -- ($(phi) + (-1, 0)$) |- (out1) to[out=-90, in=180] (m);
		\fill[catmc] (m) to[out=0, in=-90] (x) -- ($(x) + (1, 0)$) |- (out2) to[out=-90, in=0] (m);
		\fill[catmca] (m) to[out=180, in=90] (phi) -- (x) to[out=90, in=0] (m);
		\fill[catmcb] (m) to[out=180, in=-90] (out1) -- (out) to[out=-90, in=180] (m);
		\fill[catmca] (m) to[out=180, in=-90] (out) -- (out2) to[out=-90, in=0] (m);
	\end{pgfonlayer}
\end{tikzpicture}
\begin{tikzpicture}[baseline={([yshift=-.5ex]current bounding box.center)}]
	\path coordinate[dot, label=below:$\combJ_{\f{F}}$] (m)
	+ (0, 1) coordinate[label=above:$\f{F}$] (result)
	+ (-1, -1) coordinate[label=below:$\f{F}$] (phi)
	+ (1, -1) coordinate[label=below:$\f{F}$] (x);
	\draw (m) -- (result);
	\draw (m) to[out=180, in=90] (phi);
	\draw (m) to[out=0, in=90] (x);
	\begin{pgfonlayer}{background}
		\fill[catmcb] (result) -- (m) -- ($(m) + (-1, 0)$) |- (result);
		\fill[catmcb] (result) -- (m) to[out=180, in=90] (phi) -- ($(phi) + (-1, 0)$) |- (result);
		\fill[catmc] (result) -- (m) -- ($(m) + (1, 0)$) |- (result);
		\fill[catmc] (result) -- (m) to[out=0, in=90] (x) -- ($(x) + (1, 0)$) |- (result);
		\fill[catmca] (m) to[out=0, in=90] (x) -- (phi) to[out=90, in=180] (m);
	\end{pgfonlayer}
\end{tikzpicture}
	\caption{String Diagrammatic Representation of Combinator Modes $>, \combML$ and $\combJ$}
	\label{fig:combinator-sd}
\end{figure}

\begin{wrapfigure}[29]{r}{.45\textwidth}
	\centering
	\includegraphics[width=.45\textwidth]{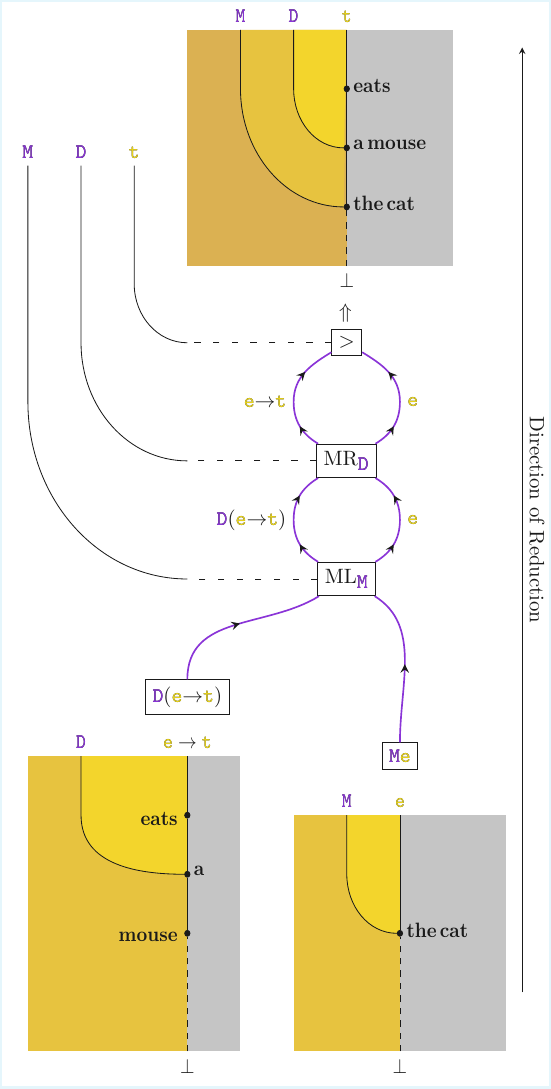}
	\caption{Representation of a parsing diagram for the sentence
		\emph{the cat eats a mouse}.
		See Figure \ref{fig:tree-box} for translation in a parse tree.}
	\label{fig:parsing-diagram}
\end{wrapfigure}

We can now return to the proof of the main result of this section:
\begin{proof}[Proof of Theorem \ref{thm:ptime-denot}]
	From Theorem \ref{thm:ptime-parse} we can retrieve a
	semantic parse tree from a sentence in polynomial time in the input.
	Lemma \ref{lem:quad-tree} states that we have a polynomial number of
	combinator denotations to apply, all done in polynomial time by hypothesis.
	We have already seen that given a denotation, handling all effects and
	reducing effect handling to normal forms can be done in polynomial time.
	The sequencing of these steps yields a polynomial-time algorithm in the
	length of the input sentence.
\end{proof}

While we have gone the assumption that we have a CFG for our language,
any type of polynomial-time structure could work, as long as it is at least
as expressive as a CFG.

The \emph{polynomial time combinators} assumption in Theorem
\ref{thm:ptime-denot} is not a complex assumption, this is for example true for
denotations based on lambda-calculus, with function application being linear in
the number of uses of variables in the function term, which in turn is linear
in the number of terms used to construct the function term and thus of words,
and the different \fmap{} being in polynomial time for the same reason.
This would also be true for denotations inspired by machine learning for
example.

\subsection{Diagrammatical Parsing}
When considering \cite{coeckeMathematicalFoundationsCompositional2010}
way of using string diagrams for syntactic parsing/reductions, we can see
string diagrams as (yet) another way of writing our parsing rules.
They are an expanded rewriting of labelled parsing trees\footnote{Point of view
	which connects this formalism nicely to the one of
	\cite{senturiaAlgebraicStructureMorphosyntax2025}, preserving all their
	results inside our theory.} presented in
\cite{bumfordEffectdrivenInterpretationFunctors2025}, .
In our typed category, we can see our combinators as natural transformations
($2$-cells): then we can see the different sets of combinators as different
arity natural transformations.
Combinators $>$, $\combML_{\f{F}}$ and $\combJ_{\f{F}}$ are represented in
Figure \ref{fig:combinator-sd}.
The coloring of the regions is purely for artistic rendition and will not be
used for larger diagrams.

\begin{wrapfigure}{r}{.45\textwidth}
	\centering
	\includegraphics[width=.45\textwidth]{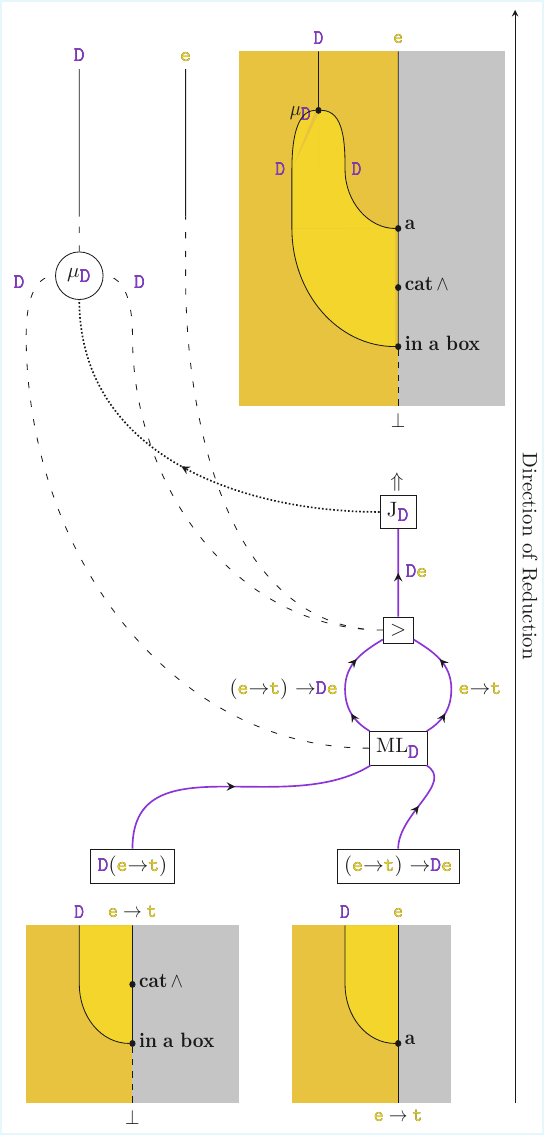}
	\caption{Example of a parsing diagram for the phrase
		\emph{a cat in a box}, presenting the integration of unary combinators
		inside the connector line. See Figure \ref{fig:tree-box} for translation in
		a parse tree.}
	\label{fig:parsing-diagram2}
\end{wrapfigure}
Understanding the diagrams could be thinking of them on an orthogonal plane to
the ones of Section \ref{sec:nondet}: we could use the syntactic version of the
diagrams to model our parsing, according to the rules in Figure
\ref{fig:combination-cfg}, and then combine the diagrams as shown in Figure
\ref{fig:parsing-diagram}, which highlights the \emph{orthogonal} components.
In this diagram we exactly see the sequence of combinations play out on the
types of the words, and thus we also see what exact \emph{stitch} would
be needed to construct the effect diagram.
Here we talk about \emph{stitches} because, in a sense, we use $2$-cells
to do braiding-like operations on the strings, and don't actually allow for
braiding inside the diagrammatic computation, leading to the intervention of
outside tools (combinators) which serve as \emph{knitting needles}.
To better understand what happens in those parsing diagrams, Figure
\ref{fig:parsing-trees} provides the translations in labelled trees of the
parsing diagrams of Figures \ref{fig:parsing-diagram},
\ref{fig:parsing-diagram2} and \ref{fig:3dparsing-diagram}.

For the combinators $\combJ$, $\combDN$ and $\combC$, which are applied to
reduce the number of effects inside a denotation, it might seem less obvious
how to include them.
Applying them to the actual \emph{parsing} part of the diagram is done
in the exact same way as in the CFG: we just add them where needed, and they
will appear in the resulting denotation as a form of forced handling, in a
sense, as shown in the result of Figure \ref{fig:parsing-diagram2}.
It is interesting to note that the resulting diagram representing
the sentence can visually be found in the connection strings that arise from
the combinators.

\smallskip

Categorically, we start from a meaning category $\mC$, our typing category, and
take it as our grammatical category.
This is a form of extension on the monoidal version by
\cite{coeckeMathematicalFoundationsCompositional2010} and
\cite{toumiHigherOrderDisCoCatPeirceLambekMontague2023}, as it is seemingly a
typed version, where we change the pregroup category for the typing category,
taken with a product for representation of the English grammar representation,
to accommodate for syntactic typing on top of semantic typing if it does not
already encompass it.
We have a first plane of string diagrams in the category
$\mC$ - our string diagrams for effect handling, as in Section
\ref{sec:nondet} - and the second \emph{orthogonal} plane of string diagrams
on a larger category, with formal endofunctors labelled by the types in our
typing category $\bar{\mC}$ and formal natural transformations for the
combinators defined in Figures \ref{fig:combination-cfg} and
\ref{fig:combinator-denotations}.
\begin{wrapfigure}{r}{.45\textwidth}
	\centering
	\begin{tikzpicture}
		\node (fig) at (0, 0) {\includegraphics[width=.4\textwidth]{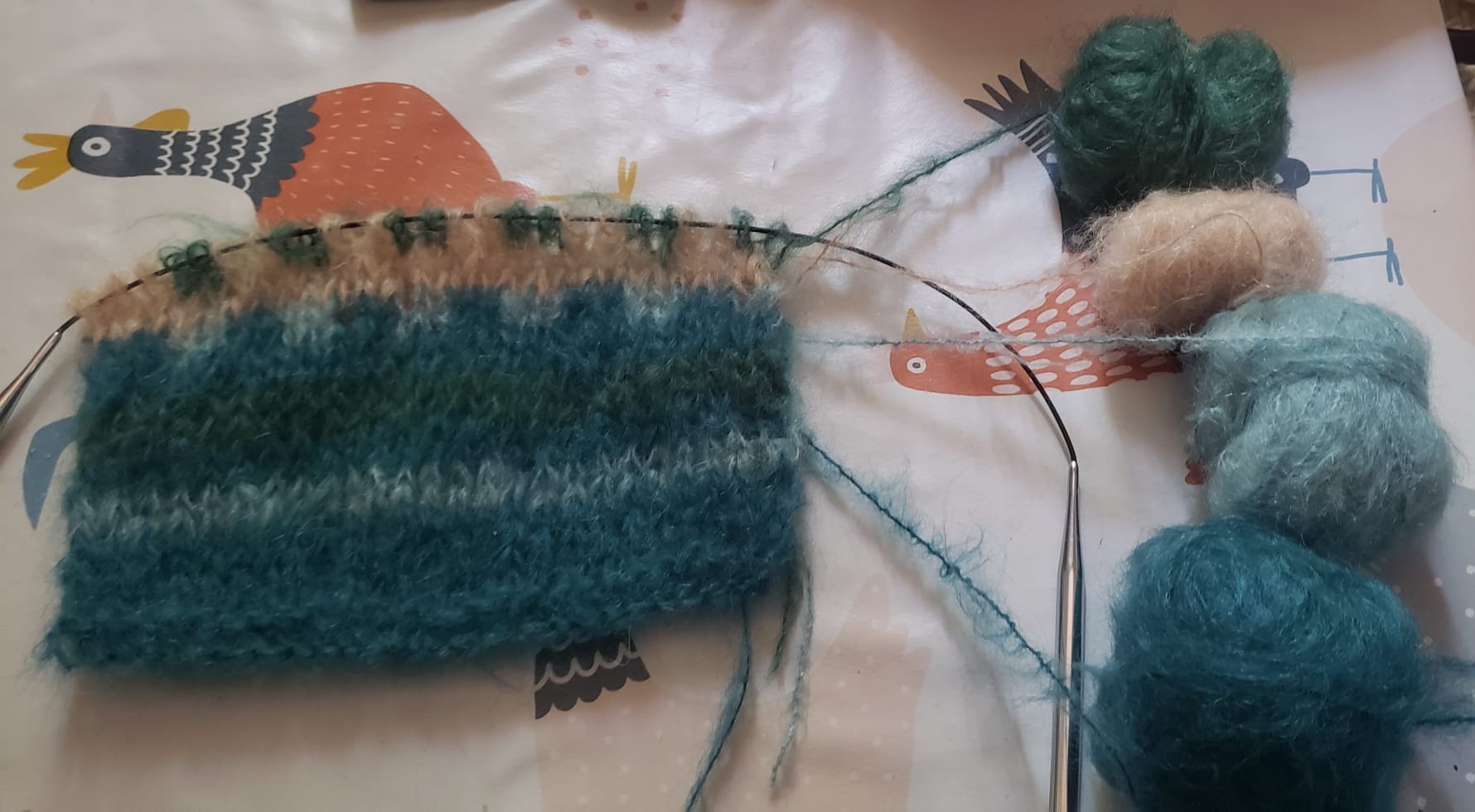}};
		\draw[->] ($(fig.south west) + (-.1, -.1)$) -- node[anchor=east] {\rotatebox{90}{Direction of Reduction}} ($(fig.north west) + (-.1, .1)$);
	\end{tikzpicture}
	\caption{Example of a \emph{Jacquard} knitwork. Photography and work courtesy
		of the author's mother.}
	\label{fig:knitting-example}
\end{wrapfigure}

The category in which we consider the second-axis string diagrams does not have
a meaning in our compositional semantics theory, and to be more precise, we
should talk about $1$-cells and $2$-cells instead of endofunctors and natural
transformations, to keep in the idea that this is really just a diagrammatic
way of computing and presenting the operations that are put to work during
semantic parsing.

The main theoretical reason why this point of view of diagrammatic parsing is
useful will be clear when looking at the rewriting rules and the normal forms
they induce, because, as stated in Theorem \ref{thm:norm}, string
diagrams make it easy to compute normal forms when provided with a confluent
reduction system.
However, the just as useful graphical interpretation of string diagrams as
easy to read expanded labelled parsing trees.
Using orthogonal planes to visualise this interpretation cannot be well
presented in a 3D space, and even less so on a page, so we suggest an
interpretation based on actual strings:
Suppose you're knitting a rainbow scarf.

\begin{wrapfigure}[22]{l}{.5\textwidth}
	\centering
	\includegraphics[width=.5\textwidth]{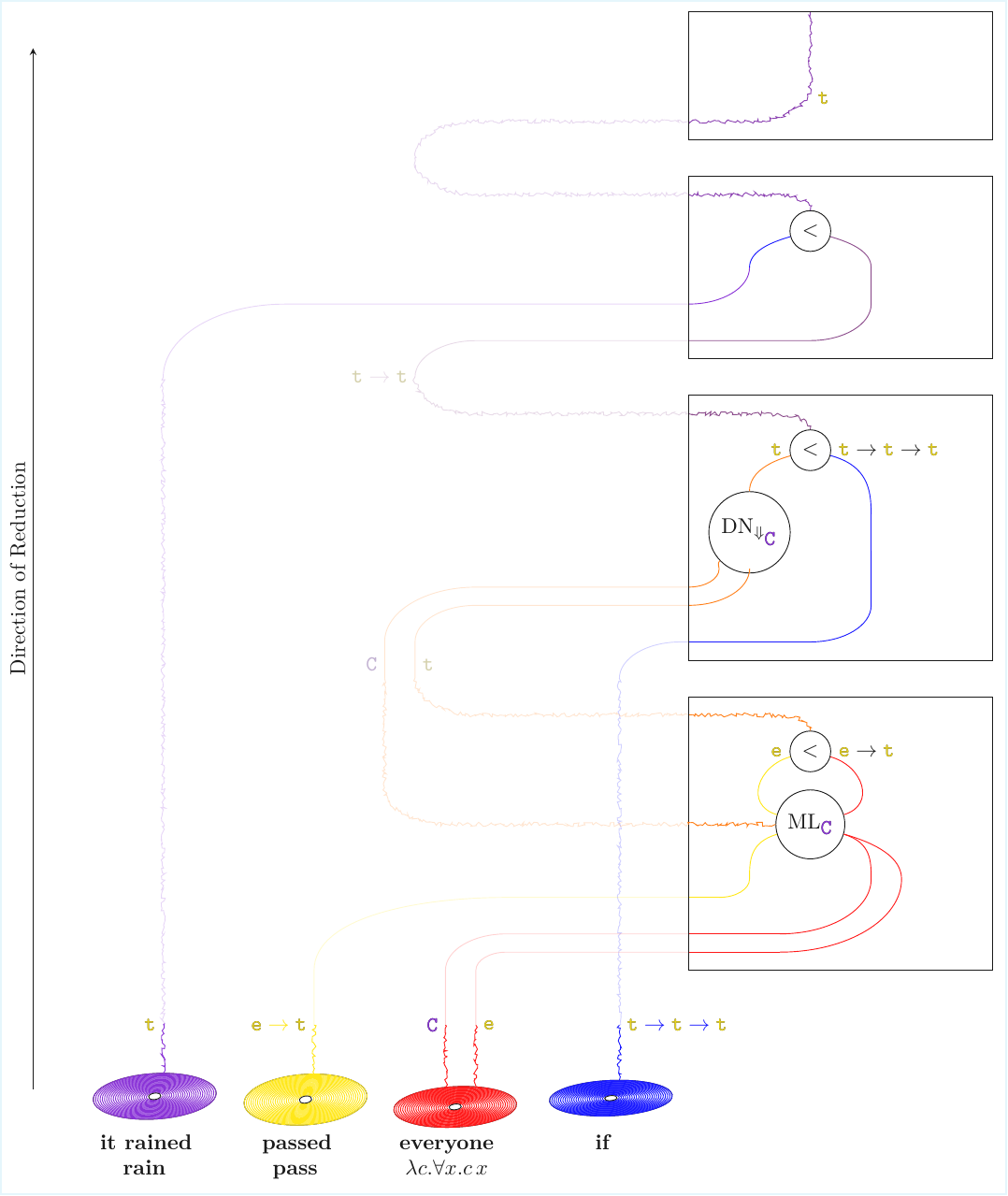}
	\caption{Knitting-like representation of the diagrammatic parsing of a sentence. See Figure \ref{fig:tree-rain} for the translation in a parse tree}
	\label{fig:3dparsing-diagram}
\end{wrapfigure}

You have multiple threads (the different words) of the different colours (their
types and effects) you're using to knit the scarf.
When you decide to change the color, you take the different threads you have
been using, and mix them up.
You can create a new colour\footnote{This is not how wool works, but
	one can also imagine a pointillist-like way of drawing using multiple
	coloured lines that superimpose on each other, or a marching band's multiple
	instruments playing either in harmony or in disharmony and changing that
	during a score.} thread from two (that's the base combinators).
Creating a thicker one from two of the same colour is the result of the
applicative mode and the monadic join.
$\fmap$ puts aside a thread until a later step, the monadic unit adds a new
thread to the pattern, and the co-unit and closure operators cut a thread which
will no longer be used.
Changing a thread by cutting it and making a knot at another point is what the
eject combinators do.

This more tangible representation can be seen in Figure
\ref{fig:3dparsing-diagram}.
The sections in the rectangle represent what happens when considering our
combination step as implementing patterns inside a knitwork, as seen in
Figure~\ref{fig:knitting-example}.
The different patterns provide, in order, a visual representation of the
different ways one can combine two strings, i.e., two types and thus two
denotations.
The sections outside of the rectangle are the strings of yarn not currently
being used to make a pattern.

\subsection{Rewriting Rules}
\label{subsec:rewrite}
In this section we study reductions for our diagrams that allows us
to improve our time complexity by reducing the size of the grammar.
This is done by looking at equations on sequences of combinators.
In the worst case, there is no improvement in big o notation in the size of the
sentence, but there is no loss.

\noindent Consider the case where we have the two arguments of our parsing step of
type $\f{F}\tau$ and $\f{G}\tau'$.
In that case we could either get a result with effects $\f{F}\f{G}$ or
with effects $\f{G}\f{F}$.
If those effects happen to be equal, which trivially will be the case when one
of the effects is external (the plural or islands functors for example), the
order of application does not matter and we choose to get the effect on the
left side of the combinator first: $\combML_{\f{F}}\combMR_{\f{G}}$ over
$\combMR_{\f{F}}\combML_{\f{F}}$.

\noindent There are sequence of modes that clearly encompass other ones
the grammar notation for ease of explanation.
One should not use the unit of a functor after using $\combML$ or $\combMR$, as
that adds void semantics.
Same things can be said for certain other derivations containing the lowering
and co-unit combinators since they could in theory be applied at many points
inside the derivation.

\noindent We use $\combDN$ when we have not used any of the following, in all
derivations:
\let\mcolsep=\multicolsep
\setlength{\multicolsep}{.4\mcolsep}
\begin{multicols}{2}
	\begin{itemize}
		\item $m_{\f{F}}, \combDN, m_{\f{F}}$ where
		      $m \in \{\combMR, \combML\}$
		\item $\combML_{\f{F}}, \combDN, \combMR_{\f{F}}$
		\item $\combA_{\f{F}}, \combDN, \combMR_{\f{F}}$
		\item $\combML_{\f{F}}, \combDN, \combA_{\f{F}}$
		\item $\combC$
	\end{itemize}
\end{multicols}
\noindent We use $\combJ$ if we have not used any of the following,
for $j \in \{\epsilon, \combJ_{\f{F}}\}$
\begin{multicols}{2}
	\begin{itemize}
		\item $\left\{m_{\f{F}}, j, m_{\f{F}}\right\}$ where
		      $m \in \{\combMR, \combML\}$
		\item $\combML_{\f{F}}, j, \combMR_{\f{f}}$
		\item $\combA_{\f{F}}, j, \combMR_{\f{F}}$,
		\item $\combML_{\f{F}}, j, \combA_{\f{F}}$
		\item $k, \combC$ for $k \in \{\epsilon, \combA_{\f{F}}\}$
		\item If $\f{F}$ is commutative as a monad:
		      \begin{itemize}
			      \item $\combMR_{\f{F}}, \combA_{\f{F}}$
			      \item $\combA_{\f{F}}, \combML_{\f{F}}$
			      \item $\combMR_{\f{F}}, j, \combML_{\f{F}}$
			      \item $\combA_{\f{F}}, j, \combA_{\f{F}}$
		      \end{itemize}
	\end{itemize}
\end{multicols}

\begin{theorem}
	The rules proposed above yield equivalent results.
\end{theorem}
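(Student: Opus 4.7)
The plan is to prove, for each item in the two lists (the $\combDN$-list and the $\combJ$-list) and for the left-preference convention, that if a derivation uses a ``forbidden'' subsequence, one can rewrite it into a derivation that avoids that subsequence while producing the same denotation. Since each rewriting is purely local on the tree, doing this shows that restricting the parser to the allowed subsequences does not lose any denotation, i.e., the rules yield equivalent results.

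First I would set up the machinery: translate each combinator into its lambda-term from Figure \ref{fig:combinator-denotations} and recall that the functors, applicatives and monads involved satisfy the standard equations (naturality of $\eta$, $\mu$, $\epsilon$; monad associativity and unitality; $\fmap$ functoriality; applicative laws; and $h \circ \eta = \id$ for any handler $h$). The left-preference rule, $\combML_{\f{F}}\combMR_{\f{G}} = \combMR_{\f{F}}\combML_{\f{G}}$ when the resulting effect order $\f{F}\f{G}$ coincides with $\f{G}\f{F}$, is a direct consequence of the functoriality of $\fmap$ applied in independent arguments of $M$: expanding both sides via the denotations and reassociating $\lambda$-abstractions shows both reduce to $\fmap_{\f{F}} (\lambda a.\, \fmap_{\f{G}} (\lambda b.\, M(a,b))\, y)\, x$, and under the identification of the two orderings this is a single term.

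Next I would handle the $\combDN$- and $\combJ$-families together, since $\combDN_{\Downarrow}$ is a handler and $\combJ_{\f{F}}$ is the monadic multiplication, and most of the equations have the same shape. Each forbidden pattern of the form $m_{\f{F}}, k, m_{\f{F}}$ with $m \in \{\combML,\combMR\}$ and $k \in \{\combDN, \combJ, \epsilon\}$ expands, via Figure \ref{fig:combinator-denotations}, into a term where two successive $\fmap_{\f{F}}$'s sandwich a $k$; using the naturality of $k$ with respect to $\f{F}$ (for $\combDN$ this is the handler's naturality, for $\combJ$ it is monadic naturality of $\mu$) the middle $k$ slides outward, collapsing with an adjacent $\combML_{\f{F}}\combML_{\f{F}}$ or $\combMR_{\f{F}}\combMR_{\f{F}}$ that is itself replaceable by $\combA_{\f{F}}$ followed by $k$, and then by the monadic unitality/associativity we obtain a strictly shorter allowed sequence. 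The mixed cases $\combML_{\f{F}}, k, \combMR_{\f{F}}$ and $\combA_{\f{F}}, k, \combMR_{\f{F}}$ etc.\ are treated the same way, using the applicative/monad interchange law in place of functoriality. The $\combC$ cases use the triangle identities for the adjunction $\f{L}\dashv\f{R}$ to show that a $\combC$ preceded by an $\epsilon$ or $\combA$ is redundant.

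The cases requiring genuine care are the commutative-monad ones, and this is where I expect the main obstacle. There, equations like $\combMR_{\f{F}}, \combJ, \combML_{\f{F}} \equiv \combA_{\f{F}}, \combJ$ and $\combA_{\f{F}}, \combJ, \combA_{\f{F}} \equiv \combA_{\f{F}}, \combJ$ depend on the Fubini-type identity that characterises commutative monads, namely that the two natural transformations $\f{F}\f{F}\to\f{F}\f{F}$ obtained by strength in different orders agree after $\mu$. I would invoke this identity explicitly (as is standard, e.g., for lists or for the reader monad) and then unfold both sides of each equation into terms involving $\fmap_{\f{F}}$, $\eta_{\f{F}}$ and $\mu_{\f{F}}$ and check they match. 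To keep the proof readable I would present this as a single lemma (``for commutative $\f{F}$, the diagram $\mu_{\f{F}} \circ \f{F}\mu_{\f{F}} \circ \sigma$ equals $\mu_{\f{F}} \circ \mu_{\f{F}}\f{F}$'') and then apply it to each of the four commutative subcases. The remaining cases are then bookkeeping exercises with the monad laws.
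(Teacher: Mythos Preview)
Your approach is correct in spirit but takes a genuinely different route from the paper. The paper's proof is diagrammatic rather than algebraic: it invokes Theorem~\ref{thm:isotopy} (planar isotopy of string diagrams) to argue that the ``same effect, same side'' patterns $m_{\f{F}},k,m_{\f{F}}$ are merely different vertical placements of the same $2$-cells and hence equal as diagrams, and it disposes of the $\combA$ versus $\combJ\,\combML\,\combMR$ cases by appealing directly to the defining equation relating applicative combination and monadic join (rather than unfolding the $\lambda$-terms). In other words, the paper cashes in the string-diagram machinery it has already built, whereas you propose to re-derive the same equalities from first principles via the $\lambda$-denotations of Figure~\ref{fig:combinator-denotations} together with the monad/applicative/adjunction laws. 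Your route is more explicit and self-contained; the paper's is shorter because the isotopy theorem absorbs most of the naturality bookkeeping in one stroke.

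One concrete point to correct in your plan: the claim that ``$\combML_{\f{F}}\combML_{\f{F}}$ \ldots\ is itself replaceable by $\combA_{\f{F}}$ followed by $k$'' is not right. $\combML_{\f{F}}\combML_{\f{F}}$ consumes $\f{F}\f{F}$ on the \emph{left} argument only and produces an $\f{F}\f{F}$-layered result, whereas $\combA_{\f{F}}$ consumes a single $\f{F}$ on \emph{each} side; these are not interchangeable shapes. The forbidden pattern $m_{\f{F}},k,m_{\f{F}}$ is instead equivalent to applying $k$ once after the two $m_{\f{F}}$'s (or before, by naturality), which is exactly what isotopy gives you for free in the paper's argument. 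Likewise, you do not mention the $\combUL/\combUR$ rule (the ``void semantics'' observation preceding the bulleted lists), which the paper handles first via the handler identity $h\circ\eta=\id$; you should add that case explicitly. With those fixes your algebraic verification would go through, and your treatment of the commutative-monad subcases via the Fubini identity is the right call --- the paper does not spell that step out at all.
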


\begin{proof}
	The rules about not using combinators $\combUL$ and $\combUR$ come from the
	notion of handling and granting termination and decidability to our system.
	The rules about adding $\combJ$ and $\combDN$ after moving two of the same
	effect from the same side (i.e. $\combML \combML$ or $\combMR\combMR$) are
	normalization along Theorem \ref{thm:isotopy}: the only reason to keep two of
	the same effects and not join them is to at some point	have something get in
	between the two.
	Joining and closure should then be done at earliest point in parsing where it
	can be done, and that is equivalent to later points because of Theorem
	\ref{thm:isotopy}.
	The last set of rules follows from the following: we should not use $\combJ
		\combML \combMR$ instead of $\combA$, as those are equivalent because of the
	equation defining them.
	The same thing goes for the other two, as we should use the units of monads
	over applicative rules and \fmap.
\end{proof}

The reductions described above amount to equational reductions for the string
diagrams, as they are equivalent to specific sequences of $2$-cells.
This leads to the same algorithms developed in Section \ref{sec:nondet} being
usable here: we just have a new improved version of Theorem
\ref{thm:confluence}: computing two different normal forms along the
tensor product of our reduction schemes, which amounts to computing a larger
normal form.
Theorem \ref{thm:norm} still stands with the improved system and thus, proving
two parses are equal can be done in polynomial time.
Moreover, considering the possible normal forms of syntactic reductions
or denotational reductions adds ways to reduce our diagrams to normal forms.

\section{Conclusion}
The functional programming approach developed in
\cite{bumfordEffectdrivenInterpretationFunctors2025} allows for increased
expressiveness in the choice of denotations, especially from a purely
theoretical point of view.
In this paper we have successfully proven that such an approach is well-founded
theoretically, but also that it doesn't come at the cost of comprehensibility
or efficiency.
Given the entirely theoretical denotations for common language objects (think
of $\mathbf{cat} = \mathbf{cat}$ as a definition), our methods might give
enough latitude to semanticists to imagine more precise denotations without the
cost of heavy statistical analyses, or at least, give tools to expand on them.
Deriving our formalism from a theory necessitates only to understand what base
combinators exist for the model: we build upon a basic semantic theory to
increase its expressiveness.

Moreover, while our methods for implementing a type and effect system have been
applied to natural language semantics, they could be applied in any language
with purely compositional semantics.
Of course, improvements can be made, in particular around the unorthodox use of
effects to define what we have called higher-order constructs and scope
islands, but also in integrating the theory in more complicated models of
denotations, such as the ones learned through a neural network for example.

\appendix
\bibliography{tdparse.bib}

\section{Presenting a Language}
In this appendix, we provide tables (\ref{fig:lexicon} and \ref{fig:functors}) describing the
modeling of a subset of the English language in our formalism.

\begin{figure}[h!]
	\centering
	\begin{subfigure}{.9\textwidth}
		\centering
		\setcellgapes{3pt}
\makegapedcells
\begin{NiceTabular}{>{\bf}LLL}
	Expression & \rm Type & \lambda\text{-Term} \\
	\word{planet}{\e\to\t}{\lambda x. \w{planet} x}{common nouns}
	\word{carnivorous}{\left( \e \to \t \right)}{\lambda x. \w{carnivorous}x}{predicative adjectives}
	\word{skillful}{\left( \e \to \t \right) \to \left( \e \to \t \right)}{\lambda p. \lambda x. px \land \w{skillful} x}{predicate modifier adjectives}
	\word{Jupiter}{\e}{{\bf j}\in \Var}{}
	\word{sleep}{\e \to \t}{\lambda x. \w{sleep} x}{}
	\word{chase}{\e \to \e \to \t}{\lambda o. \lambda s. \mathbf{chase}\left( o \right)\left( s \right)}{}
	\word{be}{\left( \e \to \t \right) \to \e \to \t}{\lambda p. \lambda x. px}{}
	\word{it}{\f{G}\e}{\lambda g. g_{0}}{}
	\word{the}{\left( \e \to \t \right) \to \f{M}\e}{\lambda p. x \text{ if } p^{-1}\left( \top \right) = \{x\} \text{ else } \#}{}
	\word{a}{\left( \e \to \t \right) \to \f{D}\e}{\lambda p. \lambda s. \left\{ \scalar{x, x \ppl s}\suchthat p x\right\}}{}
	\word{no}{\left( \e \to \t \right) \to \f{C}\e}{\lambda p. \lambda c. \lnot \exists x. p x \land c\, x}{}
	\word{\cdot\w{, a} \cdot}{\e \to \left(\e \to \t\right) \to \f{W}\e}{\lambda x. \lambda p. \scalar{x, p x}}{}
	\CodeAfter
	\begin{tikzpicture}
		\draw[double] (1|-2) -- (4|-2);
		\foreach \r in {4,6,8} {\draw (1|-\r) -- (4|-\r);}
		\foreach \r in {9,...,16} {\draw (1|-\r) -- (4|-\r);}
	\end{tikzpicture}
\end{NiceTabular}

		\caption{Lexicon for a subset of the English language}
		\label{fig:lexicon}
	\end{subfigure}

	\medskip

	\begin{subfigure}{\textwidth}
		\centering
		\resizebox{\textwidth}{!}{%
			\def\arraystretch{1.3}
\setcellgapes{3pt}
\makegapedcells
\begin{NiceTabular}{LLc}
	\rm Constructor                                                            & \fmap                                                                                                                                                             & Interpretation \\
	\f{G}\left( \tau \right) = \r \to \tau                                     & \f{G}\phi\left( x \right) = \lambda r. \phi \left(x r\right)                                                                                                      & Read           \\
	\f{W}\left( \tau \right) = \tau \times \t                                  & \f{W}\phi\left( \scalar{a, p} \right) = \scalar{\phi a, p}                                                                                                        & Write          \\
	\f{S}\left( \tau \right) = \{ \tau \}                                      & \f{S}\phi\left( \left\{ x \right\} \right) = \left\{ \phi(x) \right\}                                                                                             & Powerset       \\
	\f{C}\left( \tau \right) = \left( \tau \to \t \right) \to \t               & \f{C}\phi\left( x \right) = \lambda c. x\left( \lambda a. c \left( \phi a \right) \right)                                                                         & Continuation   \\
	\f{D}\left( \tau \right) = \ty{s} \to \f{S}\left(\tau \times \ty{s}\right) & \f{D}\phi\left( \lambda s. \left\{ \scalar{x, x \ppl s} \suchthat p x \right\} \right) = \lambda s. \left\{ \scalar{\phi x, \phi x \ppl s} \suchthat p x \right\} & State                \\
	\f{M}\left( \tau \right) = \tau + \bot                                     & \f{M}\phi\left( x \right) = \begin{cases}
		                                                                                                         \phi\left( x \right) & \text{if } \cont x: \tau \\
		                                                                                                         \#                   & \text{if } \cont x: \#
	                                                                                                         \end{cases}                                                                                                                & Maybe                     \\
	\CodeAfter
	\begin{tikzpicture}
		\draw[double] (1|-2) -- (4|-2);
		\foreach \r in {3,...,7} {\draw (1|-\r) -- (4|-\r);}
	\end{tikzpicture}
\end{NiceTabular}

		}
		\caption{Definition of a few functors, with their map on functions}
		\label{fig:functors}
	\end{subfigure}
	\caption{Presentation of a lambda-calculus lexicon for the English language}
\end{figure}

\end{document}